\author{Ingvar Ziemann\thanks{Corresponding author: ingvarz@seas.upenn.edu}}
\author{Nikolai Matni}
\author{George J. Pappas}
\affil{University of Pennsylvania}
\numberwithin{equation}{section}
\date{}
\title{State space models, emergence, and ergodicity: \\How many parameters are needed for stable predictions?}
\newcommand{\e}{\varepsilon}
\newcommand{\E}{\mathbf{E}}
\DeclareMathOperator{\blkdiag}{blkdiag}
\DeclareMathOperator{\tr}{tr}
\DeclareMathOperator{\VEC}{\mathsf{vec}}
\newcommand{\scrF}{\mathscr{F}}
\newcommand{\scrP}{\mathscr{P}}
\newcommand{\sfP}{\mathsf{P}}
\newcommand{\sfQ}{\mathsf{Q}}
\newcommand{\sfM}{\mathsf{M}}
\newcommand{\dx}{d_{\mathsf{X}}}
\newcommand{\dm}{d_{\mathsf{M}}}
\newcommand{\dy}{d_{\mathsf{Y}}}
\newcommand{\R}{\mathbb{R}}
\newcommand{\N}{\mathbb{N}}
\newcommand{\iid}{iid}
\newtheorem{theorem}{Theorem}[section] 
\newtheorem*{theorem*}{Theorem} 
\newtheorem{proposition}{Proposition}[section] 
\newtheorem{assumption}{Assumption}[section] 
\newtheorem{remark}{Remark}[section]
\newtheorem{lemma}{Lemma}[section]
\begin{document}

\maketitle

\begin{abstract} 
How many parameters are required for a model to execute a given task? It has been argued that large language models, pre-trained via self-supervised learning, exhibit emergent capabilities such as multi-step reasoning as their number of parameters reach a critical scale. In the present work, we explore whether this phenomenon can  analogously be replicated in a simple theoretical model. We show that the problem of learning linear dynamical systems---a simple instance of self-supervised learning---exhibits a corresponding phase transition. Namely, for every non-ergodic linear system there exists a critical threshold such that a learner using fewer parameters than said threshold cannot achieve bounded error for large sequence lengths. Put differently, in our model we find that tasks exhibiting substantial long-range correlation require a certain critical number of parameters---a phenomenon akin to emergence. We also investigate the role of the learner's parametrization and consider a simple version of a linear dynamical system  with hidden state---an imperfectly observed random walk in $\R$. For this situation, we show that there exists no learner using a linear filter which can succesfully learn the random walk unless the filter length exceeds a certain threshold depending on the effective memory length and horizon of the problem. 

\end{abstract}

\section{Introduction}

Consider a pre-trained large language model (LLM) obtained via self-supervised learning by predicting the next word or token. While the performance on pre-training loss exhibits rather predictable behavior \citep{kaplan2020scaling}, \citet{wei2022emergent} observe that such models often exhibit a phase transition in their downstream capabilities as the number of trainable parameters (or training FLOPs) reaches a critical scale---they exhibit emergent capabilities such as successful in-context learning \citep{brown2020language}. While these models are typically extremely large in terms of their number of parameters, a recent line of work has shown that such behavior can also be recovered in smaller models by considering appropriately simplified tasks \citep{allen2024physics}. Here, we offer a possible mechanistic explanation for this phenomenon by restricting to a simple class of auto-regressive learning models.

Namely, we point out that certain tasks---or more precisely, predicting in certain generative models---exhibiting long-range correlations and a lack of ergodicity can only be executed successfully once model scale reaches a certain critical threshold. One may think of our result  as the bias term in the bias-variance trade-off exhibiting a sharp jump---a phase transition---depending on whether the model class is rich enough to be fully descriptive of this lack of stochastic stability. We illustrate this phenomenon by a simple problem: learning a  linear dynamical system. Incidentally, such linear systems are also fundamental building blocks in the increasingly popular state state model architectures for sequence modelling---an alternative to the popular transformer architecture \citep{vaswani2017attention, gu2022efficiently}.



Here and in the sequel we study generative modelling of tasks $\sfP_Z$ corresponding to distributions over sequences of tokens $Z_{1:T}$. A learner has pre-trained a (compressed) generative model  $\sfQ_Z$ using data not necessarily coming from $\sfP_Z$. The performance of such a model $\sfQ_Z$ on a task $\sfP_Z$ will be measured by its divergence from the ground truth:
\begin{equation}\label{eq:kllossdefined}
  \mathrm{d}_{\mathsf{KL}}( \sfP_Z \| \sfQ) = \E_{\sfP} \log \frac{\mathrm{d}\sfP_Z}{\mathrm{d}\sfQ}.
\end{equation}
We ask the following question:

\begin{quote}
    \textbf{Q: \:}\textit{Suppose that $\sfQ$ comes from a parametric hypothesis class. Does there exist a critical threshold in terms of the number parameters such that $T^{-1}\mathrm{d}_{\mathsf{KL}}( \sfP_Z \| \sfQ) \to \infty $ as $T \to \infty$ unless the parameter count exceeds said threshold?}
\end{quote}

In other words, we ask whether a given task-hypothesis class combination admits \emph{stable learners}---learners for which the KL-risk does not diverge as the sequence length $T$ becomes long (notice that the normalization $T^{-1}$ is necessary to avoid trivial behavior for product measures). Our view here is that language, arriving in discrete packages such as articles and books, is non-ergodic when viewed at the package level. In this view, a single book forms a single trajectory of data in which the first word (or token) is the first data point and the last word the last data point. The distribution of words in the beginning of the book (introducing the suspects)  may well be quite different from the distribution at the end of the book (who did it?)---there is different meaning to be conveyed.

It is our hypothesis that it is exactly this lack of ergodicity that leads to emergent behavior. Our main simplifying assumption in relating non-ergodicity to model complexity is that the task $\sfP_Z$ has a latent state space model representation.

\begin{assumption}\label{ass:ssmodel}
    The $Z_{1:T}$ is in bijection to a state space model. More precisely, there exists a bijection $g$ such that $Z_t = g(Y_t)$ for $t \in [T]$ where $Y_{1:T}$ is generated by:
    \begin{equation}\label{eq:polds}
    \begin{aligned}
        X_{t+1}&=A_\star X_t +W_{t+1},\quad X_1=W_1 & \qquad
        Y_t & = C_\star X_t +V_{t}.
    \end{aligned}
\end{equation}
where $A_\star \in \mathbb{R}^{\dx\times \dx}, C_\star \in \R^{\dy \times \dx}$. Here, $W_{1:T+1}$ and $V_{1:T}$ are jointly Gaussian, mutually independent with block-diagonal covariance ($\Sigma_{W_1},\Sigma_W \otimes I_{T}, \Sigma_V \otimes I_{T}$) and mean zero.
\end{assumption}

Models of this form are standard in time series prediction tasks and systems modelling, but have also recently been popularized as building blocks in LLMs \citep{gu2022efficiently}.

Under \Cref{ass:ssmodel}, a version of the maximum entropy principle yields the following. For every nondegenerate distribution $\sfQ_Z$ over $Z_{1:T}$ under \Cref{ass:ssmodel} the following are true:
\begin{itemize}
    \item For $Y_{1:T}\sim\sfP_Y=\sfP_{g^{-1}(Z)}$ then: \begin{equation}
        \mathrm{d}_{\mathsf{KL}}( \sfP_Z \| \sfQ_Z) =\mathrm{d}_{\mathsf{KL}}( \sfP_Y \| \sfQ_{g^{-1}(Z)}).
    \end{equation}
    \item The Gaussian measure $\sfQ_Y$ with the same mean and covariance as $\sfQ_{g^{-1}(Z)}$ satisfies
    \begin{equation}
       \mathrm{d}_{\mathsf{KL}}( \sfP_Y \| \sfQ_{g^{-1}(Z)})\geq \mathrm{d}_{\mathsf{KL}}( \sfP_Y \| \sfQ_Y).
    \end{equation}
\end{itemize}
The first statement follows by bijection and the second statement is simply observing that Gaussian measures minimize KL subject to constraints on the first two moments. Our next observation is the standard (trivial yet powerful!) equivalence between generative modeling and next-token-prediction. Namely the generative modelling error on the left hand side below can be expanded in terms of the KL divergence chain rule:
\begin{equation}\label{eq:gaussianklexpanded}
\begin{aligned}
    \mathrm{d}_{\mathsf{KL}}( \sfP_Y \| \sfQ_Y)&= \sum_{t=1}^{T} \E_{\sfP} \log \frac{\mathrm{d}\sfP_Y^{t|1:t-1}}{\mathrm{d}\sfQ_Y^{t|1:t-1}}
    \\
    &=
    \frac{1}{2}\sum_{t=1}^T\left[ \| \E^{t-1}_{\sfP} Y_t-\E^{t-1}_{\sfQ} Y_t\|^2_{\Sigma^{-1/2}_{\sfQ_t}}+\tr\left(\Sigma_{\sfQ_t}^{-1} \Sigma_{\sfP_t}\right)-\log \det \left(\Sigma_{\sfQ_t}^{-1} \Sigma_{\sfP_t}\right)-\dy\right].
\end{aligned}
\end{equation}
It is reasonable to assume that $m I \preceq \Sigma_{\sfQ_t} \preceq M I $ for some universal constants $m,M$. Otherwise, either the term  $\tr\left(\Sigma_{\sfQ_t}^{-1} \Sigma_{\sfP_t}\right)$ grows unbounded (as we will see that $\Sigma_{\sfP_t}$ is well-conditioned in our examples), or the variance of the predictor becomes arbitrarily large. Combining the above we have that
\begin{equation}\label{eq:gaussiankllowerbounded}
    \mathrm{d}_{\mathsf{KL}}(\sfP_Z \| \sfQ) \gtrsim \sum_{t=1}^T\E_{\sfP }\| \E^{t-1}_{\sfP} Y_t-\E^{t-1}_{\sfQ} Y_t\|^2.
\end{equation}

It will be convenient to denote 
\begin{equation}\label{eq:elltdefined}
    \ell_T(\scrF, \scrP) \triangleq \inf_{\sfQ \in \scrF} \sup_{\sfP\in\scrP}\E_{\sfP } \sum_{t=1}^{T-1}\| \E^{t-1}_{\sfP} Y_t-\E^{t-1}_{\sfQ} Y_t\|^2.
\end{equation}
By the above reasoning via \eqref{eq:gaussianklexpanded}-\eqref{eq:gaussiankllowerbounded}, $\ell_T$ defined above in \eqref{eq:elltdefined} constitutes a lower bound on the KL-divergence risk \eqref{eq:kllossdefined} in which a learner---by picking a hypothesis in $\scrF$---competes with an adversary selecting a generative model from $\scrP$. Thus, imposing these additional constraints above, an instantiation of the above question \textbf{Q} becomes as follows. 

\begin{quote}
    \textbf{Q': \:}\textit{Fix a  family of  parametric hypothesis classes $\{\scrF_d\}_{d\in\N}$ and a family of possible generative models $\scrP$. Does there exist a critical threshold $d_\star$ in terms of the number parameters such that $$ T^{-1}\ell_T(\scrF_d, \scrP) \to \infty $$
    as $T \to \infty$ unless the parameter count exceeds said threshold ($d>d_\star$)?}
\end{quote} 
In the sequel we focus on identifying task-hypothesis pairs ($\scrP$, $\{\scrF_d\}_{d\in\N}$) where this divergence occurs. We will think of a task as exhibiting emergent behavior if it admits a nontrivial threshhold $d_\star$ mentioned in \textbf{Q'} above. 

Finally, before we proceed let us also remark that there is some degree of necessity to our choice of considering an adversarial model class $\scrP$ that we use to obtain meaningful lower bounds. To make this concrete, consider a parametric class of distributions $\scrP$ parametrized by some set of parameters, say $\theta\in \scrP$. Suppose the generative model corresponds to the parameter $\theta_\star$.  As long as $\scrF$ contains this parameter the only lower bound that can be obtained without including the supremum in \eqref{eq:elltdefined} is $0$. In other words, we need to model the fact that the learner does not have access to the parameter a priori. We accomplish this by letting an adversary pick a parameter against which the learner must compete.




\section{Contribution}

Our contributions can be stated informally as follows. 

\begin{theorem*}[Informal version of \Cref{thm:sfemergence}]
    There is emergent behavior in learning non-ergodic auto-regressive models: in a simple linear dynamical system with fully observed state, there exists no successful learner without using at least as many parameters as the squared number of (marginally) unstable eigenvalues.\footnote{Note that \Cref{thm:sfemergence} gives a more nuanced statement in terms of Jordan blocks---the above statement corresponds to the worst case Jordan block structure.} By contrast, this is possible once the parameter count exceeds said threshhold.
\end{theorem*}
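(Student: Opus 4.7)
The plan is to combine the KL lower bound \eqref{eq:gaussiankllowerbounded} with a volumetric argument in matrix space. First, I would set up the adversarial class $\scrP$ by fixing the Schur-stable part of $A_\star$ and letting the $k\times k$ block $A_u$ acting on the marginally unstable invariant subspace vary over a $k^2$-dimensional manifold of matrices with spectrum on the unit circle. The Gaussian-surrogate reduction in the second bullet preceding \eqref{eq:gaussianklexpanded} lets me replace every $\sfQ\in\scrF_d$ by a Gaussian without raising the target KL, so each $\sfQ$ contributes an effective linear one-step predictor $\hat A_u^\sfQ\in\R^{k\times k}$ on the unstable subspace.

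Second, I would quantify error amplification. Because the marginally unstable component of $Y_t$ behaves like a random walk (polynomially faster for larger Jordan blocks), the unstable covariance $\Sigma_t^u := \E P_u Y_{t-1}Y_{t-1}^\T P_u^\T$ has smallest eigenvalue growing at least linearly in $t$. A bias-variance decomposition then yields the per-step lower bound $\|A_u-\hat A_u^\sfQ\|_F^2 \cdot \lambda_{\min}(\Sigma_t^u)$, which sums to at least order $T^2$ times a squared mismatch.

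Third, I would invoke a packing argument in $\R^{k\times k}$. The image of $\sfQ\mapsto\hat A_u^\sfQ$ is a $d$-dimensional parametric surface whose Kolmogorov $\epsilon$-entropy is of order $d\log(1/\epsilon)$, while a small neighborhood of the identity in the adversary's $k^2$-dimensional manifold admits a $2\epsilon$-packing of cardinality $(c/\epsilon)^{k^2}$. For $d<k^2$ and sufficiently small $\epsilon$, pigeonholing produces a packing point at distance $\geq\epsilon$ from every $\hat A_u^\sfQ$, so $\sup_\sfP\|A_u-\hat A_u^\sfQ\|_F^2\geq\epsilon^2$ for every $\sfQ\in\scrF_d$. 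Combined with the amplification, $T^{-1}\ell_T(\scrF_d,\scrP)\gtrsim\epsilon^2 T\to\infty$.

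The main obstacle I anticipate is making the map $\sfQ\mapsto\hat A_u^\sfQ$ well defined, since $\sfQ$ may induce a history-dependent Kalman filter rather than a single fixed matrix; this is handled by passing to the large-$t$ steady-state filter restricted to the unstable subspace, where Gaussianity forces linearity and the entropy bound applies. The converse direction ($d\geq k^2$ suffices) follows by letting $\scrF_{k^2}$ consist of linear Gaussian SSMs with the unstable $k\times k$ block as a free parameter and predicting via plug-in or mixture, achieving per-step risk $O(k^2\log T/T)\to 0$.
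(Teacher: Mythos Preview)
Your two-step skeleton---amplification via the linearly growing unstable covariance, then a dimension obstruction---is exactly the paper's. The paper computes $\sum_k (J_\star^\e)^k(J_\star^\e)^{\dagger,k}$ after passing to Jordan form and observes that its smallest eigenvalue grows at least linearly in $t$, so the time-averaged risk is bounded below by a positive multiple of $\|J-J_\star^\e\|_F^2$ on the unstable block; it then argues that the equation $A(\theta)=A_\star^\e$ cannot be solved for every $A_\star^\e$ in an $\e$-ball because the image of a $\dm$-dimensional manifold has measure zero in a $d_\star^2$-dimensional one. Your packing/entropy version is a metric refinement of the same Sard-type reasoning and is perfectly valid in principle, though heavier than needed.

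There is, however, a genuine gap in your adversarial class. The set of $k\times k$ real matrices with spectrum on the unit circle is a proper algebraic subvariety and is \emph{not} $k^2$-dimensional (already for $k=1$ it is the two-point set $\{\pm 1\}$), so your claimed $(c/\epsilon)^{k^2}$ packing does not exist there. The paper avoids this by letting $A_\star^\e$ range over a full $\e$-ball around $A_\star$ (which is $d_\star^2$-dimensional on the unstable block) and then choosing the bad $A_\star^\e$ outside the learner's image but still with spectral radius $\geq 1$; this last point is possible because the strictly stable matrices form an open set not containing $A_\star$. Your argument is easily repaired the same way: drop the unit-circle constraint and pack an open ball instead.

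Finally, the Kalman-filter detour and the ``main obstacle'' you anticipate are artifacts of working in a setting more general than the paper's. In the fully observed model the paper \emph{defines} $\scrF_d$ to consist of linear one-step predictors $X_t\mapsto A(\theta)X_t$ with $\theta$ ranging over a $\dm$-dimensional manifold $\sfM$, so $\hat A^{\sfQ}=A(\theta)$ is given directly and no steady-state extraction is needed. If you insist on arbitrary Gaussian $\sfQ$, your proposed fix (take the large-$t$ limit of the implied filter) is not obviously well defined: a generic Gaussian $\sfQ$ on $Y_{1:T}$ need not have a stationary innovations representation, and the map $\sfQ\mapsto\hat A_u^\sfQ$ may fail to exist or to be continuous, which breaks the entropy bound. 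Restricting to the paper's hypothesis class removes this issue entirely.
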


\begin{figure}[htbp]
    \centering
    \begin{subfigure}{0.45\textwidth}
        \centering
        \includegraphics[width=\linewidth]{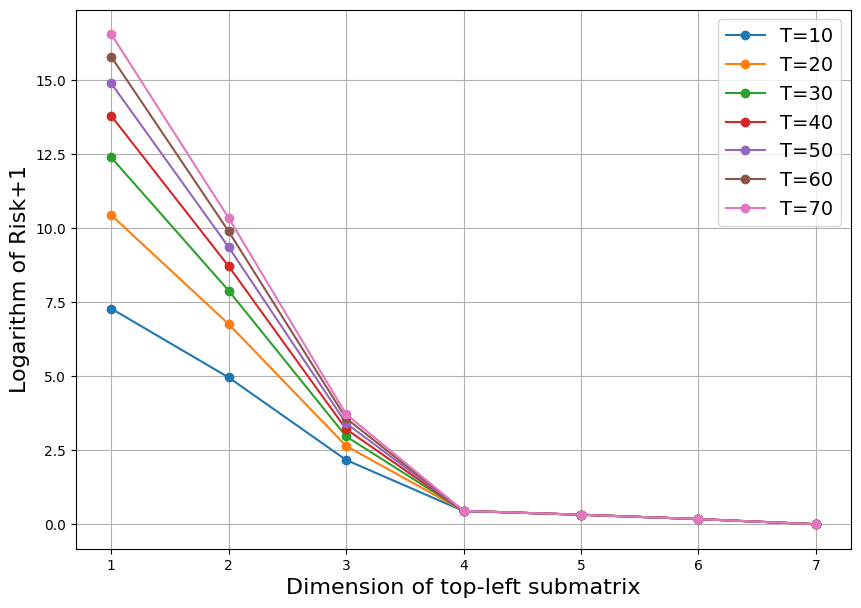}
        \caption{$A_\star$ chosen as the block-diagonal matrix consisting of first a Jordan block of size 4 with eigenvalue $1$ and second the rescaled identity of size 3 with eigenvalue $0.4$.}
        \label{fig:first_plot}
    \end{subfigure}
    \hfill
    \begin{subfigure}{0.45\textwidth}
        \centering
        \includegraphics[width=\linewidth]{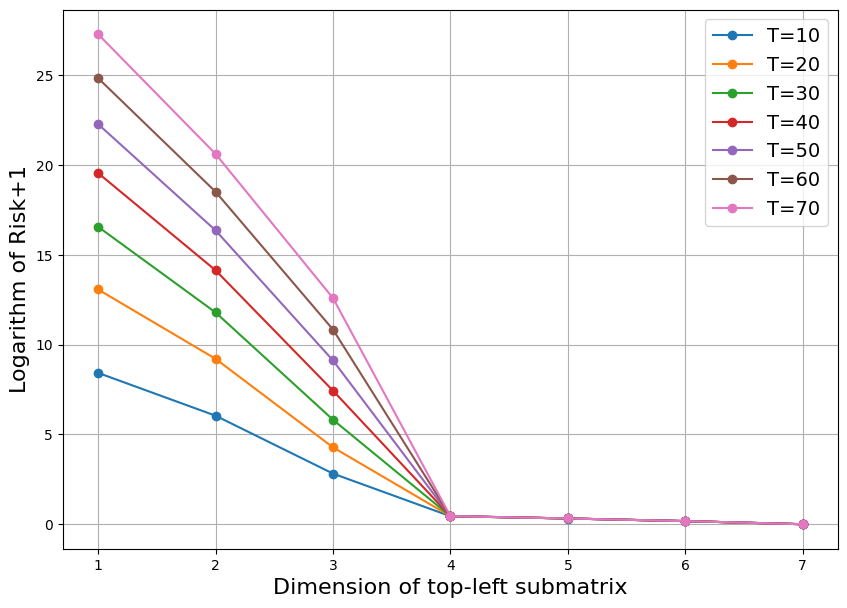}
         \caption{$A_\star$ chosen as the block-diagonal matrix consisting of first a Jordan block of size 4 with eigenvalue $1.1$ and second the rescaled identity of size 3 with eigenvalue $0.4$.}
        \label{fig:second_plot}
    \end{subfigure}
    \caption{We illustrate \Cref{thm:sfemergence} by a simple numerical example of learning a $\dx$-dimensional linear system $X_{t+1}=A_\star X_t+W_t$ with fewer than the required number of parameters and where $\dx=7$. Namely, we only estimating the top-left $k\times k$ sub-matrix, $k\in [\dx]$. We run the least squares estimator for samples drawn from $m \gg 2^{\dx}$ many trajectories and vary the trajectory length, $T$. As predicted, as $T$ grows the risk diverges unless the parametrization is sufficiently high-dimensional, $k=4$, at which the point the risk drops to near zero and exhibits more stable behavior (note the logarithmic scale on the $y$-axis).}
    \label{fig:side_by_side_plots}
\end{figure}

We also prove an extension to \Cref{thm:sfemergence} that applies to imperfect state observations but is restricted to learning in parametric classes consisting of finite-dimensional filters.

\begin{theorem*}[Informal version of \Cref{thm:stablepoldsthm}]
    For an imperfectly observed random walk in $\R$, there exists no successful learner in the class of linear filters unless the filter length exceeds a certain threshhold based on the detectability and horizon of the problem.
\end{theorem*}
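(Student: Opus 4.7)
The plan is to reduce to a minimax prediction analysis over length-$L$ FIR filters and to exploit the non-stationarity of the random walk for a horizon-dependent lower bound. First, I would invoke the Gaussian KL bound \eqref{eq:gaussiankllowerbounded} to pass $\ell_T$ to the summed squared gap between the Kalman filter prediction $\E_{\sfP}[Y_t \mid Y_{1:t-1}]$ and a length-$L$ FIR predictor $\hat Y_t^{\sfQ} = \sum_{s=1}^L \alpha_s Y_{t-s}$. The Kalman filter's steady-state impulse response decays geometrically at rate $1-K$, where the Kalman gain $K$ is determined by the adversary's choice of the detectability ratio $\sigma_V/\sigma_W$.

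Second, writing $X_t = X_{t-s} + \sum_{r=t-s+1}^{t} W_r$ lets me decompose the filter output as $\sum_s \alpha_s Y_{t-s} = \paren{\sum_s \alpha_s} X_t + (\textnormal{stationary noise corrections})$. The gap to the Kalman prediction therefore contains a drift term $\paren{1-\sum_s \alpha_s} X_t$ with variance $\paren{1-\sum_s \alpha_s}^2 \cdot t\, \sigma_W^2$ that grows linearly in $t$ precisely because $X_t$ is a random walk. Summed over $t \in [T]$ this contributes a term of order $\paren{1-\sum_s \alpha_s}^2 T^2 \sigma_W^2$ to $\ell_T$, so any filter hoping for $\ell_T/T = O(1)$ must enforce $\sum_s \alpha_s = 1$ up to $O(T^{-1/2})$, i.e., pass DC with near-unit gain.

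Third, under this constraint I would reparametrize via the tail sums $\beta_k = \sum_{s \geq k} \alpha_s$ with $\beta_0 = 1, \beta_L = 0$, and write the per-step residual in the form $\sigma_V^2 \paren{1 + \|\alpha\|^2} + \sigma_W^2 \|\beta\|^2$. Minimizing this quadratic form is a discrete Dirichlet problem whose continuum limit is the ODE $\lambda^2 f'' = f$ with $\lambda = \sigma_V/\sigma_W$; the optimum is a hyperbolic-sine profile whose energy scales like $\lambda^2/L$ in the regime $L \ll \lambda$, producing an excess per-step error over the Kalman filter of order $\sigma_V^2/L$. Combining this with the nonstationary transient contributions from the second step delivers the desired blow-up $\ell_T/T \to \infty$ whenever $L$ falls below a threshold $L^{\star}(K,T)$ that interpolates between the effective memory $1/K$ and a logarithmic factor of $T$, with the adversary picking $\sigma_V/\sigma_W$ to push both terms simultaneously.

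The main obstacle is the simultaneous optimization: the learner picks $\alpha_1, \dots, \alpha_L$, the adversary picks the detectability ratio, and one must track both the sharp linear-in-$t$ drift penalty (forcing $\sum_s \alpha_s = 1$) and the boundary value problem governing the residual excess. Making the resulting minimax argument clean---in particular, controlling the interaction between the Riccati-governed Kalman gain, the FIR truncation error, and the nonstationary variance $\V(X_t)=t\sigma_W^2$ in a regime where $L$, $T$, and $\lambda$ all enter---is the technical heart of the argument.
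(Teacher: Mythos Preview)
Your approach departs substantially from the paper's, and as written it has a genuine gap. The paper never decomposes into a DC-gain term plus a stationary Dirichlet problem; instead it relaxes \eqref{eq:poldsrisk} by (i) allowing a different filter at each $t$ and (ii) replacing $Y_{1:T-1}$ by the noise-free $\mathbf{C}X_{1:T-1}$, obtaining the Schur-complement quadratic form of \Cref{prop:poldsrelax}. It then evaluates this form in closed form via the explicit block inverses of the random-walk covariance (Lemmas~A.1--A.2), with $(\VEC M_\star)_1$ given by the geometric Kalman weights $\theta=(\rho^{t-h-1},\dots,\rho,1)$; the divergence is read off directly from those formulas. There is no variational/continuum limit and no minimax over $\sigma_V/\sigma_W$---the generative model in \Cref{thm:stablepoldsthm} is fixed, so your adversary step is not part of the statement.

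The gap in your argument is that steps~2 and~3 together do \emph{not} force $T^{-1}\ell_T\to\infty$. Once the learner sets $\sum_{s=1}^L\alpha_s=1$ exactly (which it can, at no cost), your drift term $(1-\sum_s\alpha_s)X_t$ vanishes identically; there is no residual ``$O(T^{-1/2})$ tension'' to exploit. Under that constraint, your Dirichlet analysis produces a per-step excess over the Kalman filter of order $\sigma_V^2/L$ that is \emph{uniform in $t$}, so after summing and dividing by $T$ you get $\ell_T/T=O(\sigma_V^2/L)$, which is bounded for every fixed $L$. The ``nonstationary transient contributions'' you invoke at the end are never made precise, and the two pieces you have written down do not combine into anything that grows with $T$. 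What the paper's route captures---and what your decomposition misses---is the interaction between the \emph{tail} Kalman weights $M^\star_{h+1},\dots,M^\star_{t-1}\propto \rho^{h},\dots,\rho^{t-2}$ and the growing covariance of the unobserved early states $X_{1:t-h-1}$; this is exactly the quantity $(\VEC M_\star)_1^\top[\mathbf{R}_{11}-\mathbf{R}_{12}\mathbf{R}_{22}^{-1}\mathbf{R}_{21}](\VEC M_\star)_1$ that the paper computes. If you want to salvage your approach you must track that tail explicitly rather than absorbing it into a stationary residual.
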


\begin{remark}
    As a byproduct of our analysis, we note in passing that \Cref{thm:stablepoldsthm} shows that the truncation level used in \cite{tsiamis2019finite} for improper linear system identification cannot be much improved in general. In particular, improper learning with a finite length filter always (unless further constraints are added to the hypothesis class) incurs an extra approximation-theoretically induced logarithmic factor as opposed to the maximum likelihood estimator.
\end{remark}

\section{Emergence in Fully Observed Systems}

As a first example, let us consider a fully observed state space model. In this case, $C_\star$ in \eqref{ass:ssmodel} is simply the identity and $V_t$ is identically zero: 
\begin{equation}\label{eq:genmodlds}
    X_{t+1}= A_\star X_t +W_t, \qquad t=1,\dots,T-1, \qquad X_1=W_0
\end{equation}
We consider the setting in which a learner observes the trajectory $X_{1:T}$ and seeks to learn the generative model by recovering $A_\star$. We suppose that each $\scrF_d$ is given by a map $A_d: \sfM \mapsto \mathbb{R}^{\dx \times \dx}$ such that $\E^{t-1}_\sfQ Y_t = A(\theta) X_t$ where $\sfM$ is some smooth manifold of dimension $d_{\sfM}$. In this case the prediction risk becomes
\begin{equation}
    \sum_{t=1}^T\E_{\sfP }\| \E^{t-1}_{\sfP} Y_t-\E^{t-1}_{\sfQ} Y_t\|^2 = \sum_{t=1}^{T-1} \E \|A_d(\theta) X_t-A_\star X_t\|^2.
\end{equation}



\begin{assumption} \label{ass:nonergodicity}
    The spectral radius of $A_\star$ is at least unity.
\end{assumption}

We now show that when the generative model \eqref{eq:genmodlds} is not ergodic---\Cref{ass:nonergodicity} holds---the risk exhibits a phase transition in how it scales with the trajectory length $T$ as a function of the number of trainable parameters---the dimension of $\sfM$, $\dm$. In both cases below we abuse notation and write $\ell_T(\sfM, A_\star) = \ell_T(\sfM, \sfP_\star)$ where $\sfP_\star$ is the distribution of  $X_{1:T}$ with the parametrizing matrix $A_\star$ in the generative model \eqref{eq:genmodlds}.

\begin{theorem}\label{thm:sfemergence}
    Impose \Cref{ass:nonergodicity}. Let $d_\star^2$ be the sum of the squares of the algebraic multiplicities of all eigenvalues of $A_\star$ with magnitude at least unity. 

    \begin{enumerate}
        \item If $\dm < d_\star^2$, then for every $\e>0$ there exists $A_\star^\e \in \R^{\dx \times \dx}$ with $\|A_\star^\e-A_\star\|\leq \e$ such that:
        \begin{equation}
        \lim_{T\to \infty } T^{-1} \ell_T(\sfM, A_\star^\e) = \infty.
    \end{equation}
    \item If $\dm  < d_\star^2-d_{\star,1}$, there exists invertible $P$ such that:
    \begin{equation}
        \lim_{T\to \infty } T^{-1}\ell_T(\sfM,P^{-1}A_\star P) = \infty
    \end{equation}
    where $d_{\star,1}$ is the sum of the algebraic multiplicities of the  of all eigenvalues of $A_\star$ with magnitude at least unity.
    \end{enumerate}
\end{theorem}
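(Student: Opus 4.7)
The plan is to reduce $T^{-1} \ell_T(\sfM, A_\star^\e)$ to a weighted matrix-approximation problem and then combine a Gramian growth estimate with a transversality argument. Writing $M(\theta) := A_d(\theta) - A_\star^\e$ and $\Gamma_T := T^{-1} \sum_{t=1}^{T-1} \E[X_t X_t^\top]$, one has
\begin{equation}
T^{-1} \ell_T(\sfM, A_\star^\e) = \inf_{\theta \in \sfM} \tr\bigl(M(\theta)^\top M(\theta) \Gamma_T\bigr),
\end{equation}
so the question becomes whether the $\dm$-dimensional manifold $A_d(\sfM) \subset \R^{\dx \times \dx}$ can approach $A_\star^\e$ in the degenerating seminorm induced by $\Gamma_T$.

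The first substantive step is to control $\Gamma_T$ via the Jordan decomposition of $A_\star^\e$. Under \Cref{ass:nonergodicity}, splitting $\R^{\dx} = U \oplus S$ into the generalized eigenspaces for eigenvalues of magnitude $\geq 1$ and $<1$ respectively yields a projector $\Pi_U$ for which $\lambda_{\min}(\Pi_U \Gamma_T \Pi_U^\top) \to \infty$ (at least linearly for marginally stable eigenvalues, exponentially for strictly unstable ones), while $(I - \Pi_U) \Gamma_T (I - \Pi_U)^\top$ remains bounded. For $\tr(M^\top M \Gamma_T)$ to stay $O(1)$ as $T \to \infty$, the action of $M$ on $U$ must vanish in the diverging directions. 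A bookkeeping argument in Jordan normal form---tracking block by block the polynomial/exponential growth of $A_\star^t v$ along generalized eigenvectors $v$---shows that the admissible matrices form a linear subspace $\mathcal{V}(A_\star^\e) \subset \R^{\dx \times \dx}$ of codimension at least $d_\star^2 = \sum_i m_i^2$, the count arising precisely from the same-eigenvalue diagonal Jordan blocks of $M_{uu}$.

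The last step is a transversality argument. Since $A_d(\sfM)$ is a $\dm$-dimensional smooth submanifold and $A_\star^\e + \mathcal{V}(A_\star^\e)$ has codimension $\geq d_\star^2 > \dm$, a Sard-type theorem guarantees that for a generic perturbation $A_\star^\e$ of $A_\star$ the two are disjoint. A local compactness argument (exhausting $\sfM$ by compacts and using that any unbounded sequence $A_d(\theta_n)$ already blows up the prediction risk on the stable part) upgrades disjointness to positive separation $\inf_{\theta} \|M(\theta)\|_{\mathcal{V}^\perp} > 0$. Paired with the Gramian growth, this gives $T^{-1} \ell_T(\sfM, A_\star^\e) \to \infty$. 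For Part 2, after a similarity $A_\star \mapsto P^{-1} A_\star P$ realizing the Jordan form, the $d_{\star,1}$ diagonal entries of the Jordan blocks of $M_{uu}$ can be absorbed trivially into the parametrization, leaving the $\sum_i m_i(m_i-1) = d_\star^2 - d_{\star,1}$ strict off-diagonal Jordan entries as the irreducible obstruction.

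The main obstacle, in my view, is the transversality step: upgrading generic non-intersection into a quantitative uniform separation over a non-compact $\sfM$. A plausible remedy is to first restrict attention to $\theta$ for which $A_d(\theta)$ stays bounded (using that unboundedness alone forces $\tr(M^\top M \Gamma_T) \to \infty$ through the stable block) and then invoke a quantitative transversality theorem. The structural claim in the Gramian step---isolating exactly $d_\star^2$ (rather than the larger $\dx \cdot d_{\star,1}$) essential constraints---also requires care, most naturally via a Lyapunov-style identity that singles out the same-eigenvalue diagonal blocks as the irreducible source of divergence.
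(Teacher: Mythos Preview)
Your Part~1 outline is correct and parallels the paper: pass to Jordan form, observe that the time-averaged Gramian diverges on the unstable subspace $U$, and conclude via a dimension count that the $\dm$-dimensional image $A_d(\sfM)$ cannot meet all nearby targets. The paper is slightly more direct---after restricting to a single unstable eigenvalue block of size $m$, bounded risk forces the exact equality $A = A_\star^\e$ on that block, so the adversary's targets sweep an $m^2$-dimensional open set that $A_d(\sfM)$ cannot cover when $\dm < m^2$; summing over blocks gives $d_\star^2$. Your transversality phrasing reaches the same conclusion, but you are over-complicating the codimension step: once every eigenvalue of $\Pi_U \Gamma_T \Pi_U^\top$ diverges, the condition $\tr(M \Gamma_T M^\top) = O(1)$ simply forces $M|_U = 0$ (every column of $M$ lying in $U$ must vanish), which is $\dx \cdot d_{\star,1} \geq d_\star^2$ linear constraints. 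There is no need for a ``Lyapunov-style identity'' to whittle this down to $d_\star^2$; the larger codimension only makes the Sard step easier, and the theorem as stated follows a fortiori. The compactness issue you flag is real but is handled the same way in both arguments---the closure of the image of a smooth map from a $\dm$-manifold still has Hausdorff dimension at most $\dm$.

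Part~2 has a genuine gap. Your sentence about ``absorbing the $d_{\star,1}$ diagonal entries of the Jordan blocks of $M_{uu}$ into the parametrization'' is not a valid mechanism: $M_{uu}$ is a difference of two arbitrary matrices with no canonical Jordan structure, and the fact that the eigenvalues of $P^{-1}A_\star P$ are known to the learner does not by itself remove $d_{\star,1}$ constraints. The argument the paper uses is different in kind. Since the adversary now varies only $P$, the targets $P^{-1}A_\star P$ range over the \emph{conjugacy orbit} of $A_\star$, and the relevant threshold is the dimension of that orbit, not a codimension in $\R^{\dx\times\dx}$. By the Orbit--Stabilizer theorem this dimension equals $\dim GL(m) - \dim(\textnormal{centralizer of } J_\star)$; for a single Jordan block of size $m$ the centralizer consists of the upper-triangular Toeplitz matrices and has dimension $m$, giving orbit dimension $m^2 - m$, and summing over unstable blocks yields $d_\star^2 - d_{\star,1}$. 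Without this orbit-dimension computation (or an equivalent), your Part~2 does not go through.
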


The first part of \Cref{thm:sfemergence} shows that unless the number of parameters is quadratic in the number of unstable modes, there exists no learner with bounded loss that is robust to infinitesimal perturbations of the generative model. It is also interesting to note that learning becomes drastically more difficult if $A_\star$ has a single large Jordan block as opposed to being diagonal in some basis. The second part shows that this remains true even if the spectrum is fixed a priori and the perturbations are restricted to a change of basis. By contrast, if $A_\star$ is strictly stable, it is easy to see that there always exists $\e>0$ such that if $\|A_\star^\e-A_\star\|\leq \e$ it holds that  $\lim_{T\to \infty } \ell_\sfM(A_\star^\e) < \infty$---and this holding is independent of $\dm$.

\begin{proof}
First, we observe that for some invertible matrix $P_\e$ we may write $A_\star^\e = P^{-1}_\e J_{\star}^\e P_\e$ for the Jordan normal form of $A_\star^\e$ and where $J_{\star}^\e$ is block diagonal with the eigenvalues of $A_\star^\e$ on its main diagonal. The model \eqref{eq:genmodlds} can thus equivalently be written as
\begin{equation}
    \label{eq:genmodblockdiag}
    \underbrace{P_\e X_{t+1}}_{\triangleq H_{t+1}}=J_{\star}^\e\underbrace{P_\e X_{t}}_{\triangleq H_t}+\underbrace{P_\e W_t}_{\triangleq V_t}
\end{equation}
and \eqref{eq:genmodblockdiag} can unrolled as
\begin{equation}\label{eq:pspacecov}
    H_{t}=\sum_{k=0}^{t-1} (J_{\star}^\e)^k V_{t-k-1} \quad \textnormal{with}\quad \E H_tH_t^\dagger = \sum_{k=0}^{t-1}(J_{\star}^{\e})^k P_\e P^\dagger_\e  (J_{\star}^{\e})^{k,\dagger}
\end{equation}
where $\dagger$ denotes conjugate transpose.

Second, we observe that we may restrict attention without loss of generality to the situation in which $A_\star$ has a single repeated eigenvalue with multiplicity $d_\star$ by decomposing the system \eqref{eq:genmodlds} into its distinct $A_\star$-invariant subspaces. The general lower bound then follows by summing each of the individual subspace lower bounds.

Third, we notice that
\begin{equation}
    \begin{aligned}
        &\min_{A\in \sfM}\frac{1}{T-1} \sum_{t=1}^{T-1} \E \| (A-A_\star^\e)  X_t\|^2\\
        &=\min_{A\in \sfM}\frac{1}{T-1} \sum_{t=1}^{T-1} \E \| ( A -P^{-1}_eJ_{\star}^\e P_\e)  X_t\|^2
        &&(A_\star^\e =P^{-1}_eJ_{\star}^\e P_\e )
        \\
        &=\min_{A\in \sfM}\frac{1}{T-1} \sum_{t=1}^{T-1} \E \| (P^{-1}_e P_\e  AP^{-1}_e P_\e -P^{-1}_eJ_{\star}^\e P_\e)  X_t\|^2 
        && ( P^{-1}_e  P_\e = I )\\
        &
        = \min_{J\in P_\e \sfM P_\e^{-1}}\frac{1}{T-1} \sum_{t=1}^{T-1} \E \|P^{-1}_\e (J-J_{\star}^\e)PX_t\|^2 
        && (J \triangleq P_\e  AP^{-1}_e  )\\
         &
        = \min_{J\in P_\e \sfM P_\e^{-1}}\frac{1}{T-1} \sum_{t=1}^{T-1} \E \|P^{-1}_\e (J-J_{\star}^\e)H_t\|^2 
        && (P_\e X_t = H_t  )\\
        &
        \geq 
         \lambda_{\min}^2(P^{-1}) \min_{J\in P_\e \sfM P_\e^{-1}}\frac{1}{T-1} \sum_{t=1}^{T-1} \E \|(J-J_{\star}^\e)H_t\|^2
         \\
         &
         \geq
\lambda_{\min}^2(P^{-1})\lambda_{\min}^2(P)   \min_{J\in P_\e \sfM P_\e^{-1}}\frac{1}{T-1} \sum_{t=1}^{T-1} \E \tr \left(  \sum_{k=0}^{t-1} (J_{\star}^\e)^k (J_{\star}^\e)^{\dagger,k}  (J-J_{\star})(J-J_{\star})^\dagger\right). &&\eqref{eq:pspacecov}
    \end{aligned}
\end{equation}
Now the $d_\star$-many of the diagonal elements of each $ (J_{\star}^\e)^k (J_{\star}^\e)^{\dagger,k} $ are at least unity. Consequently all the $d_\star$-many of the diagonal elements of $  \sum_{k=0}^{t-1}  (J_{\star}^\e)^k (J_{\star}^\e)^{\dagger,k}  $ are larger than or equal to $t$. Indeed for some $|\lambda|\geq 1$ we have $J_\star^\e = (\lambda I_{d_\star}+N)$ for some nilpotent matrix $N$ and consequently $ \lim_{t\to \infty} \frac{1}{t} \lambda_{\min} \left( \sum_{k=0}^{t-1}  (J_{\star}^\e)^k (J_{\star}^\e)^{\dagger,k} \right)> 0$. Since both this matrix and $ (J-J_{\star}^\e)(J-J_{\star}^\e)^\dagger$ are positive semi-definite, it follows that $\ell_{\sfM
}<\infty$ if and only if there exists $ J \in P_\e \sfM P_\e^{-1}$ with $J =J_\star^\e$. 

To finish the proof notice that:
\begin{equation}\label{eq:solvabilityLDS}
    \begin{aligned}
        &\exists J \in P_\e \sfM P_\e^{-1}: &J =J_\star^\e,\\
        \Leftrightarrow \:
        &\exists A \in \sfM:  & P_\e AP^{-1}_\e = J_\star^\e,\\
        \Leftrightarrow\:
        &\exists A \in \sfM:  & A = P_\e^{-1}J_\star^\e P_\e=A_\star^\e.
    \end{aligned}
\end{equation}
The  first part of the result follows since $A_\star^\e$ varies over a $d_\star^2$-dimensional manifold and $A$ varies over a $\dm$-dimensional manifold. Hence, for \eqref{eq:solvabilityLDS} to have a solution for every $A_\star^\e$ we require that $\dm \geq d_\star^2$.

Now for the second part, let instead $P$ vary over the general linear group. In this case, $P^{-1}J_\star P$ varies over a $(d_\star^2-d_\star)$-dimensional manifold whereas $A \in \sfM$ only varies over a $\dm$-dimensional manifold. To see that the degrees of freedom of $P^{-1}J_\star P$ are indeed  $d_\star^2-d_\star$, invoke the Orbit-Stabilizer Theorem and notice that the dimension of the orbit of $J_\star$ under conjugation by the general linear group is equal to the dimension of the quotient space of the general linear group modulo the centralizer of $J_\star$. The general linear group has dimension $d_\star^2$ and the centralizer of a Jordan block under this action has dimension $d_\star$. Consequently, this equation cannot have a solution for every admissible choice of right hand side unless $\dm\geq d_\star^2-d_\star$. 
\end{proof}

\section{Hidden States and the Role of the Parametrization}

In \Cref{thm:sfemergence} we saw that we require a quadratic amount of parameters in the number of unstable modes. However, this was assuming direct access to the internal system state. If instead the state is hidden, the observations are no longer Markovian and exhibit longer range memory. We will now turn to investigating the appearance of such memory interacts with the potential instability (non-ergodicity) of $A_\star$. Let us also restrict attention to hypothesis classes consisting of finite-dimensional filters of the form $f_t(Y_{1:t-1})=\sum_{k=1}^{h} F_k Y_{t-k}$ for every $t$ (where $F_k$ is the decision-variable that does not depend on $t$). Finite memory of this type is present in many popular architectures, including transformers, where it is referred to as the context length \citep{vaswani2017attention}. We denote these classes $\sfM_{h}$. In this setting, for a fixed integer $h$ and hypothesis $f\in \sfM_{h}$, with representation $F_{1:h}$, we have that:
\begin{equation}\label{eq:poldsrisk}
     \sum_{t=1}^T\E_{\sfP }\| \E^{t-1}_{\sfP} Y_t-\E^{t-1}_{\sfQ} Y_t\|^2 = \sum_{t=1}^{T-1} \E \left\|\sum_{k=1}^{h} F_k Y_{t-k}-\E[Y_t | Y_{1:t-1}] \right\|^2.
\end{equation}
At this stage it must be pointed out that it is not just the dimensionality of the parametrization that matters but also the parametrization itself. There certainly exists a hypothesis class using no more than $\dx(\dx +\dy)$-many parameters rendering \eqref{eq:poldsrisk} null. On the other hand, the dimension of the internal state may be large or not even known a priori in which case it is appropriate to approximate \eqref{eq:polds} by a finite-dimensional filter---the question then becomes: \emph{what is the minimal filter length such that \eqref{eq:poldsrisk} remains stable?}

The analysis in the sequel passes via the Kalman filter. The next assumption guarantees that this can be represented by a linear time-invariant system. The part of the assumption dealing with time-invariance does not meaningfully restrict the generality of our results since the filter parameters convergence to their steady-state values at a super-exponential rate. 

\begin{assumption}\label{ass:ss}
The pair ($C,A$) is observable and $\Sigma_W,\Sigma_V \succ 0$. Moreover, the covariance of the initial state satisfies $\Sigma_{W_1}=\Sigma_{\mathrm{ss}}$, where $\Sigma_{\mathrm{ss}}$ solves the filter discrete algebraic Riccati equation.
\end{assumption}
Under \Cref{ass:ss} we have that 
    \begin{equation}
        \E[Y_t | Y_{1:t-1}] = \sum_{k=1}^{t-1} M^\star_k Y_{t-k} = M_\star \mathbf{Z}^{T-t}Y_{1:T-1}
    \end{equation}
    where $M^\star_k = C_\star (A_\star-L_\star C_\star)^kL_\star$ for some matrix $L_\star$ known as the \emph{Kalman gain} and accordingly $M_\star = C_\star\begin{bmatrix}
         (A_\star-L_\star C_\star) & \cdots &  (A_\star-L_\star C_\star)^{T-1}
    \end{bmatrix} $ and $\mathbf{Z}$ is the downshift operator. Similarly 
    \begin{equation}
       \sum_{k=1}^{h} F_k Y_{t-k} = F \begin{bmatrix}
           0_{t-h-1} & I_h & 0_{T-t-1} 
       \end{bmatrix} Y_{1:T-1} 
       =
       F \begin{bmatrix}
           0_{T-h-1} & I_h 
       \end{bmatrix}
       \mathbf{Z}^{T-t} Y_{1:T-1}
       =FE_h\mathbf{Z}^{T-t} Y_{1:T-1}
    \end{equation}
where $F=\begin{bmatrix}
    F_h & \cdots & F_1
\end{bmatrix}$ and $E_h = \begin{bmatrix}
           0_{T-h-1} & I_h 
       \end{bmatrix} $. This conveniently allows us to lower-bound the prediction risk via the following closed form. 
\begin{equation}\label{eq:poldsrelax}
    \begin{aligned}
     \min_{F_{1:h}}\sum_{t=1}^{T-1} \E \left\|\sum_{k=1}^{h} F_k Y_{t-k}-\E[Y_t | Y_{1:t-1}] \right\|^2
     &
     \geq \sum_{t=1}^{T-1}\min_{F_{1:h}} \E \left\|\sum_{k=1}^{h} F_k Y_{t-k}-\E[Y_t | Y_{1:t-1}] \right\|^2\\
     &
     =
     \sum_{t=1}^{T-1} \E \min_{F} \left\|(FE_h - M_\star)\mathbf{Z}^{T-t}Y_{1:T-1} \right\|^2\\
     &
     \geq
    \sum_{t=1}^{T-1} \E \min_{F} \left\|(FE_h - M_\star)\mathbf{Z}^{T-t}\mathbf{C} X_{1:T-1} \right\|^2\\
     &
     =
     \sum_{t=1}^{T-1} (\VEC M_\star)_1^\top [ \mathbf{R}_{11} -\mathbf{R}_{12} \mathbf{R}_{22}^{-1}\mathbf{R}_{21} ](t) (\VEC M_\star)_1
    \end{aligned}
\end{equation}
where $\mathbf{R}$ and $\mathbf{C}$ are as in \eqref{eq:rcdef}. Henceforth, we fix a single possible generative model \eqref{ass:ssmodel} and drop the dependency on $\scrP$ in $\ell_T(\sfM_h)=\ell_T(\sfM_h,\scrP)$ with $\scrP$ described by \eqref{ass:ssmodel}. We have established the following.

\begin{proposition}\label{prop:poldsrelax}
    Impose  \Cref{ass:ss}, and let 
    \begin{equation}\label{eq:rcdef}
        \begin{bmatrix}
              \mathbf{R}_{11} & \mathbf{R}_{12} \\ \mathbf{R}_{21} & \mathbf{R}_{22}
      \end{bmatrix}(t)=\mathbf{R}(t) = \mathbf{Z}^{T-t} \mathbf{C} \left(\E \left[  X_{1:T-1}X_{1:T-1}^\top \right]\right)\mathbf{C}^\top \mathbf{Z}^{T-t,\top} \quad \textnormal{with}\quad \mathbf{C}=\blkdiag(C_\star).
    \end{equation}
    For every class of linear filters $\mathsf{M}_h$ we have that:
    \begin{equation}
        \ell(\sfM_h) \geq \sum_{t=1}^{T-1} (\VEC M_\star)_1^\top [ \mathbf{R}_{11} -\mathbf{R}_{12} \mathbf{R}_{22}^{-1}\mathbf{R}_{21} ](t) (\VEC M_\star)_1.
    \end{equation}
\end{proposition}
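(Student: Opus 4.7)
The plan is to formalize each step of the chain of bounds sketched in \eqref{eq:poldsrelax}: reduce the filter-length-$h$ problem to a sequence of per-time-step Wiener problems in the latent state, and identify the residual variance with a Schur complement of the latent-state covariance.

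First, I would invoke \Cref{ass:ss} to identify $\E[Y_t \mid Y_{1:t-1}]$ with the output of the steady-state Kalman filter, giving the LTI representation $\E[Y_t \mid Y_{1:t-1}] = \sum_{k=1}^{t-1} M^\star_k Y_{t-k}$ with $M^\star_k = C_\star(A_\star - L_\star C_\star)^k L_\star$. Rewriting both the learner's candidate $\sum_{k=1}^h F_k Y_{t-k}$ and the optimal predictor as linear maps on the stacked vector $Y_{1:T-1}$ via the downshift operator $\mathbf{Z}$ and the block-selection matrix $E_h$, the $t$-th summand in \eqref{eq:poldsrisk} becomes $\|(FE_h - M_\star)\mathbf{Z}^{T-t}Y_{1:T-1}\|^2$, where $F = \begin{bmatrix} F_h & \cdots & F_1 \end{bmatrix}$.

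Second, I would apply the elementary inequality $\min_F \sum_t \E g_t(F) \ge \sum_t \min_F \E g_t(F)$ to pass to a per-time-step minimization, and then substitute $Y_{1:T-1} = \mathbf{C} X_{1:T-1} + V_{1:T-1}$. Because $V$ has zero mean and is independent of $X$, for any fixed $F$ the quadratic decomposes into an $X$-term plus a nonnegative $V$-term with vanishing cross terms, so discarding the $V$-contribution preserves the lower bound and reduces each step to $\min_F \E\|(FE_h - M_\star)\mathbf{Z}^{T-t}\mathbf{C}X_{1:T-1}\|^2$. For each fixed $t$ this is an unconstrained quadratic in $F$ with Gram matrix $\mathbf{R}(t)$; partitioning $\mathbf{R}(t)$ according to the image of $E_h$ versus its complement and applying the $\VEC$ identity converts the problem to a linear regression of $(\VEC M_\star)_1$ against the remaining blocks, whose normal-equation solution gives the Schur-complement residual $(\VEC M_\star)_1^\top[\mathbf{R}_{11} - \mathbf{R}_{12}\mathbf{R}_{22}^{-1}\mathbf{R}_{21}](t)(\VEC M_\star)_1$.

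The main obstacle I expect is the bookkeeping in the last step: one must align the $\VEC$-ordering of $F$, the block structure induced by $E_h$, and the two-by-two Schur partition of $\mathbf{R}(t)$ so that the quadratic form on the right-hand side is produced unambiguously and the ``first block'' picked out by the subscript $1$ on $\VEC M_\star$ is exactly the coordinates that $F$ cannot influence. A secondary technical issue is that $\mathbf{R}_{22}(t)$ may be singular (for instance when $t$ is small and the state covariance has not yet ``mixed in'' enough under observability of $(C,A)$); this can be handled by interpreting $\mathbf{R}_{22}^{-1}$ as the Moore--Penrose pseudoinverse and checking that $\mathbf{R}_{21}(\VEC M_\star)_1$ lies in the range of $\mathbf{R}_{22}$, so that the Schur complement continues to record the true minimum. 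The remaining steps (Kalman representation, min--sum swap, orthogonality of $V$ and $X$, Wiener filtering) are essentially classical and should go through without complications.
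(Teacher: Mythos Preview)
Your proposal is correct and follows essentially the same route as the paper: the paper's proof is precisely the chain \eqref{eq:poldsrelax} (swap $\min$ and $\sum$, rewrite via $E_h$ and $\mathbf{Z}$, drop the measurement-noise contribution by independence of $X$ and $V$, then solve the per-$t$ quadratic in $F$ to obtain the Schur complement). Your write-up is in fact more careful than the paper's, which leaves the bookkeeping and the invertibility of $\mathbf{R}_{22}$ implicit.
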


The question now is whether the quadratic form $(\VEC M_\star)_1^\top [ \mathbf{R}_{11} -\mathbf{R}_{12} \mathbf{R}_{22}^{-1}\mathbf{R}_{21} ](t) (\VEC M_\star)_1$ is uniformly bounded in time or not. We shall see that there are simple examples in which it is not unless the history $h$ is allowed to grow sufficiently rapidly. Namely, let us consider noisy observations of the following scalar random walk model:
\begin{equation}\label{eq:polds2}
    \begin{aligned}
        X_{t+1}&=X_t +W_t, & \qquad
        Y_t & = X_t +V_{t+1}.
    \end{aligned}
\end{equation}



Our next result shows that it is not just the number of unstable modes that matter in determining how many parameters are required, but also the memory length of the process $Y_{1:T}$. 

\begin{theorem}\label{thm:stablepoldsthm}
Impose \Cref{ass:ss} and suppose that $A_\star = C_\star =1$ as in \eqref{eq:polds2}. Let $\rho=A_\star -L_\star C_\star =1-L_\star$.  For every $ h = o\left( \frac{\log T}{\log \left(1-\rho\right)}\right)$ we have that
\begin{equation}
   \lim_{T\to \infty}  T^{-1}\ell(\sfM_h) = \infty.
\end{equation}
\end{theorem}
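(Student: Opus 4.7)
The plan is to invoke \Cref{prop:poldsrelax} for the scalar random walk and to track how the per-time Schur-complement quadratic form scales in $t$. Specializing to $A_\star = C_\star = 1$, the Kalman predictor is $\hat{Y}_t = \sum_{k=1}^{t-1} \rho^{k-1} L_\star\, Y_{t-k}$, and iterating the recursion $\hat{Y}_t = \rho \hat{Y}_{t-1} + L_\star Y_{t-1}$ for $h$ steps yields the clean decomposition
\begin{equation*}
    \hat{Y}_t = \rho^h \hat{X}_{t-h\mid t-h-1} + \sum_{k=1}^{h} \rho^{k-1} L_\star\, Y_{t-k}.
\end{equation*}
The second sum is supported on $\spn(Y_{t-h:t-1})$, and hence---after passing to the clean-observation relaxation used in the proof of \Cref{prop:poldsrelax}---on $\spn(X_{t-h:t-1})$. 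The per-time lower bound therefore reduces to the norm squared of $\rho^h \hat{X}_{t-h\mid t-h-1}$ orthogonal to this finite window.

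To extract a rate growing in $t$, the natural tool is the spectral (Fourier) domain. The integrated observations have spectral density $S_Y(\omega) = \sigma_W^2 / |1 - e^{i\omega}|^2 + \sigma_V^2$ with a $1/|1 - e^{i\omega}|^2$ singularity at the origin, while the length-$h$ truncation error of the Kalman transfer function $L_\star/(1-\rho e^{-i\omega})$ is the IIR tail $\rho^h e^{-i\omega h}/(1-\rho e^{-i\omega})$, of magnitude $\rho^h / L_\star$ at $\omega = 0$. The product of these two diverges like $\rho^{2h}/\omega^2$ at the origin. On a horizon of length $T$ the corresponding integral becomes a Riemann sum over the DFT grid, cut off at $\omega \sim 1/T$, so the dominant bin contributes $\Theta(\rho^{2h}) \cdot T$ to the per-time error. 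Summing this bound over $t \in \{1,\dots,T-1\}$ then yields $T^{-1} \ell(\sfM_h) = \Omega(\rho^{2h} T)$, which diverges whenever $\rho^{2h} T \to \infty$---exactly the regime identified by $h = o(\log T / |\log \rho|)$.

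The main obstacle is translating this stationary/spectral heuristic into a rigorous lower bound on the non-stationary quadratic form $(\VEC M_\star)_1^\top [\mathbf{R}_{11} - \mathbf{R}_{12} \mathbf{R}_{22}^{-1} \mathbf{R}_{21}](t) (\VEC M_\star)_1$ appearing in \Cref{prop:poldsrelax}. The inverse covariance of the random walk $X_{1:t-1}$ is a tridiagonal discrete Laplacian (with boundary terms set by $\Sigma_{\mathrm{ss}}$), and the Schur complement obtained by removing any $h$-length window can be written in closed form as a small-rank perturbation of the ambient covariance. I would carry out that explicit calculation, verify that contraction with the geometric coefficient vector $(L_\star, \rho L_\star, \rho^2 L_\star,\dots)$ preserves the leading $\rho^{2h}$ scaling uniformly in $t$ large compared to $h$, and then sum over $t$ to recover the claimed divergence; some care is additionally needed to handle boundary corrections from the first few time indices where the truncation has not yet ``filled up'' the window.
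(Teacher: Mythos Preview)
Your proposal is correct and, once the deferred computation is actually carried out, collapses to essentially the same argument as the paper. Both routes start from \Cref{prop:poldsrelax} and both finish by evaluating the Schur--complement quadratic form against the geometric Kalman vector in closed form for the random walk. The paper does this directly via two auxiliary lemmas: writing the random-walk covariance as $\mathbf{L}\mathbf{L}^\top$ with $\mathbf{L}$ the lower-triangular all-ones matrix, it uses Sherman--Morrison to obtain $\mathbf{R}_{22}^{-1}$ as a tridiagonal matrix with a rank-one correction, multiplies out $\mathbf{R}_{21}\mathbf{R}_{22}^{-1}\mathbf{R}_{12}$ explicitly, and then contracts both $\mathbf{R}_{11}$ and this product with the geometric vector $(\rho^{n-1},\dots,\rho,1)$. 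This is precisely the ``tridiagonal Laplacian / small-rank perturbation'' calculation you sketch in your final paragraph, so you have identified the right technical tool; the paper simply executes it rather than deferring it.

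The two genuine differences are stylistic. First, your $h$-step unrolling of the Kalman recursion, isolating $\rho^h\hat{X}_{t-h\mid t-h-1}$ as the part orthogonal to the window, is a clean way to see why the $\rho^{2h}$ prefactor must appear; the paper leaves this implicit in the geometric structure of $\theta$. Second, your spectral/Fourier heuristic---the $1/\omega^2$ singularity of the integrated process against the $\rho^h/(1-\rho e^{-i\omega})$ tail, truncated at frequency $1/T$---correctly predicts the $\Omega(\rho^{2h}T)$ rate and gives nice intuition, but since the random walk is non-stationary it stays a heuristic, and the paper makes no use of it. In short: same core computation, with an extra (non-rigorous) spectral wrapping on your side that the paper replaces by a direct elementary calculation.
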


The result states that we require a context length or history at least of order $\frac{\log T}{1-\rho}$ for a length $T$ task with with $\rho \in (0,1)$. It is interesting to note that when the variance of the $V_t$ grows large, it can be analytically verified that $\rho$ tends to $1$. This offers the following interpretation: a poor signal to noise ratio in the filtering task corresponding to the generative model appearing in \Cref{ass:ssmodel} leads to a large required parameter dimension (context length).

\begin{proof}
Via \eqref{prop:poldsrelax} and \Cref{lem:marginalquadlemma} we have that:
\begin{equation}\label{eq:stablepoldsthmcalc1}
    \begin{aligned}
        &\min_{F_{1:h}}\frac{1}{T}\sum_{t=1}^{T-1} \E \left\|\sum_{k=1}^{h} F_k Y_{t-k}-\E[Y_t | Y_{1:t-1}] \right\|^2\\
     &
    \geq 
    \frac{1}{T}\sum_{t=1}^{T-1} 
         (\VEC M_\star)_1^\top [ \mathbf{R}_{11} -\mathbf{R}_{12} \mathbf{R}_{22}^{-1}\mathbf{R}_{21} ](t) (\VEC M_\star)_1 && (\textnormal{\Cref{prop:poldsrelax}}) \\
         &
         \gtrsim
         \frac{1}{T}\sum_{t=1}^{T-1} 
         \sum_{l=1}^{t-k}\left( \sum_{j=1}^{t-k-l}  \rho^{j-1}\right)^2 - \frac{1}{h+1} \left(\sum_{j=1}^{t-h} j \rho^{t-k-j} \right)^2 &&(\textnormal{\Cref{lem:marginalquadlemma}})\\
         & \gtrsim
          T^{-1} \left(\frac{T^2}{(1-\rho)^2}-O(1)\right)\\
         &\asymp \frac{T}{(1-\rho)^{2h}}.
    \end{aligned}
\end{equation}
Note that the RHS of \eqref{eq:stablepoldsthmcalc1} diverges for $h = o\left( \frac{\log T}{\log (1-\rho)}\right)$.
\end{proof}

\section{Discussion}

We have proposed a mechanistic explanation of emergence in a relatively simple class of autoregressive learning models. Crucially, and somewhat in parallel to empirical observation \citep{wei2022emergent}, we find that tasks requiring long-range prediction (put differently: multi-step reasoning) are precisely those which "emerge" at a critical model scale.  We also note that our findings are not at all in contrast with the recent theoretical model offered by \cite{arora2023theory}. They take scaling laws for loss functions as a given \citep{kaplan2020scaling}, and illustrate how such scaling laws  can naturally lead to the emergence of more complex reasoning. In the present work we argue directly about the loss. Consequently, we offer a complementary perspective to theirs and try rather to understand whether certain tasks intrinsically require a critical scale.


Our work also begs a number of further interesting questions and future directions are abound. We believe that there are many opportunities in exploring LLM related phenomena through the lens of systems modelling. This has also been pointed out by e.g., \cite{soatto2023taming} and \cite{alonso2024state}. It would certainly be interesting to study more concrete emergent skills from this lens, such as in-context learning. \cite{garg2022can} show that standard transformer models---such as the GPT-2 family \citep{radford2019language}---can perform linear regression from \iid\ examples without explicit supervision. How does the situation change when the examples are drawn sequentially and possibly lack ergodicity? Another interesting phenomenon in which one may want to understand the role of ergodicity, and in which sequence modelling may help, are language model "hallucinations". \cite{kalai2024calibrated} find that there is no necessary statistical reason for these to occur in an \iid\ generative model---does this change if we adopt a structured sequential perspective?

Our study also has a number of interesting extensions to other model classes. It may for instance be worthwhile to instantiate the Markovianesque model of \cite{ildiz2024self} and see if similar results can be derived.  It may also be interesting to consider other function classes allowing for some degree of nonlinearity.  \cite{goel2024can} prove than an attention-style architecture can approximate a stabilizing Kalman filter with sufficient context length---can we find corresponding lower bounds? Arguably, one would also like to incorporate some degree of representation learning into the present analysis.  \cite{ildiz2024understanding} study how multiple tasks compete for "representation capacity" via the spectral properties of certain tasks. It is natural to ask how phenomena such as lack of ergodicity and instability affect this competition.

\section*{Acknowledgements}

The authors acknowledge support from a Swedish Research Council international postdoc grant, NSF award CPS-2038873, NSF award SLES-2331880, AFOSR Award FA9550-24-1-0102, NSF CAREER award ECCS-2045834 and NSF award EnCORE-2217062.

\addcontentsline{toc}{section}{References}

\bibliographystyle{plainnat}

\bibliography{main.bib}

\appendix

\section{Auxilliary Lemmata}

\begin{lemma}
\label{lem:marginalquadlemma}
    Fix $\rho \in \R$ and let $\theta = \begin{bmatrix}
        \rho^{T-k-1} & \cdots & \rho & 1
    \end{bmatrix}^\top \in \R^{T-k} $. Then 
    \begin{equation}
         \theta^\top  \mathbf{R}_{11}\theta = \sum_{l=1}^{T-k}\left( \sum_{j=1}^{T-k-l}  \rho^{j-1}\right)^2
    \end{equation}
    and 
    \begin{equation}
        \theta^\top (  \mathbf{R}_{21}\mathbf{R}_{22}^{-1}\mathbf{R}_{12}
        )\theta = \frac{1}{h+1} \left(\sum_{j=1}^{T-k} j \rho^{T-k-j} \right)^2
    \end{equation}
\end{lemma}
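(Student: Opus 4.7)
Both identities are purely algebraic and follow by specializing to the scalar random walk $A_\star = C_\star = 1$. Under this specialization the process covariance has the classical ``min'' structure $\E[X_i X_j] \propto \min(i,j)$ (plus a constant from the initial state), and after threading through the shift $\mathbf{Z}^{T-t}$ one obtains: $\mathbf{R}_{11}$ is (up to zero padding coming from the downshift) a min-matrix on the active early block; $\mathbf{R}_{22}$ is a min-matrix on the recent-$h$ block plus a constant shift; and $\mathbf{R}_{12}$ has a particularly simple rank-one form dictated by the Markov property of the random walk. The plan is to exploit each of these structures in turn to evaluate the two quadratic forms in closed form.

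For the first identity, I will use the classical representation
\[
\min(i,j) \;=\; \sum_{\ell \geq 1}\mathbf{1}\{\ell \leq i\}\mathbf{1}\{\ell \leq j\},
\]
which gives $\theta^\top \mathbf{R}_{11}\theta = \sum_{\ell}\left(\sum_{i\geq \ell}\theta_i\right)^2$. Substituting the geometric $\theta_i = \rho^{T-k-i}$, the inner sum collapses to a finite geometric series $\sum_{j=1}^{T-k-\ell}\rho^{j-1}$, reproducing the advertised right-hand side after relabelling $\ell \mapsto l$.

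For the Schur-complement identity, the key structural observation is that $\mathbf{R}_{12}$ has rank one. Indeed, whenever $i$ lies in the early block and $j'$ in the late block one has $i < j'$, so $\min(i,j') = i$ does not depend on $j'$; hence every column of $\mathbf{R}_{12}$ equals the common vector $u$ whose entries enumerate the active early-block indices, and
\[
\mathbf{R}_{12}\mathbf{R}_{22}^{-1}\mathbf{R}_{21} \;=\; \bigl(\mathbf{1}^\top \mathbf{R}_{22}^{-1}\mathbf{1}\bigr)\, u u^\top.
\]
The scalar $\theta^\top u = \sum_i i\,\theta_i$ then becomes exactly the weighted geometric sum $\sum_j j\rho^{T-k-j}$ appearing inside the square. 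The normalizing constant $\mathbf{1}^\top \mathbf{R}_{22}^{-1}\mathbf{1}$ is computed by decomposing $\mathbf{R}_{22} = c\,\mathbf{1}\mathbf{1}^\top + M_h$ with $M_h$ the $h\times h$ min-matrix, applying Sherman--Morrison, and invoking the standard fact that $M_h^{-1}$ is the tridiagonal matrix with diagonal $(2,\dots,2,1)$ and off-diagonals $-1$, so that $\mathbf{1}^\top M_h^{-1}\mathbf{1} = 1$. This produces the stated factor after the constant-shift contribution is accounted for.

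The main obstacle is bookkeeping: one must carefully track the index shifts induced by $\mathbf{Z}^{T-t}$ and verify that the block decomposition of $\mathbf{R}(t)$ splits the active covariance into an early (distant past, invisible to the filter) block and a late (recent-$h$, visible to the filter) block of the correct sizes, so that the min-matrix identities and the rank-one structure of $\mathbf{R}_{12}$ apply as advertised. Once this accounting is in place, the two identities reduce to elementary summation, with the rank-one property of $\mathbf{R}_{12}$ explaining why the Schur correction collapses to a single squared linear functional rather than a full quadratic form.
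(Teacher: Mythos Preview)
Your proposal is correct and follows essentially the same route as the paper. The paper packages the computation by introducing the lower-triangular all-ones Cholesky factor $\mathbf{L}$ (so that $\mathbf{R}=\mathbf{L}\mathbf{L}^\top$ is exactly the min-matrix you describe), writes $\theta^\top\mathbf{R}_{11}\theta = \|\mathbf{L}_{11}^\top\theta\|^2$ for the first identity, and for the second identity invokes an auxiliary lemma (\Cref{lem:marginaltoeplemma}) that computes $\mathbf{R}_{22}^{-1}$ explicitly via Sherman--Morrison and then multiplies out $\mathbf{R}_{21}\mathbf{R}_{22}^{-1}\mathbf{R}_{12}$ to obtain the rank-one matrix $\frac{1}{h+1}\,uu^\top$. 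Your direct observation that $\mathbf{R}_{12}=u\mathbf{1}^\top$ is rank one---so that the Schur term collapses to $(\mathbf{1}^\top\mathbf{R}_{22}^{-1}\mathbf{1})\,(\theta^\top u)^2$ before any explicit inversion is carried out---is a slight streamlining of the same argument, but the underlying ingredients (min-matrix structure, Sherman--Morrison, $\mathbf{1}^\top M_h^{-1}\mathbf{1}=1$) are identical.
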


\begin{proof}
    Notice that $\mathbf{R}_{11}=\mathbf{L}_{11}\mathbf{L}_{11}^\top$ so that $\theta^\top  \mathbf{R}_{11}\theta = \| \mathbf{L}_{11}^\top \theta\|^2$. Direct calculation yields that
    \begin{equation}
    \mathbf{L}_{11}^\top \theta 
    =
        \begin{bmatrix}
    1 & 1 & \cdots & 1 \\
    0 & 1 & \ddots & 1 \\
    \vdots & 0 & \ddots & \vdots \\
    0 & \cdots & 0 & 1 
\end{bmatrix} 
\begin{bmatrix}
        \rho^{T-k-1} \\ \cdots \\ a \\ 1
    \end{bmatrix}
    =
    \begin{bmatrix}
        \sum_{j=1}^{T-k}  \rho^{j-1}\\
         \sum_{j=1}^{T-k-1}  \rho^{j-1}\\
         \vdots\\
         1+\rho\\
         1
    \end{bmatrix}
    \end{equation}
    In particular using the closed form expressions in  \Cref{lem:marginaltoeplemma} we have that
    \begin{equation}
       \theta^\top  \mathbf{R}_{11}\theta = \sum_{l=1}^{T-k}\left( \sum_{j=1}^{T-k-l}  \rho^{j-1}\right)^2
    \end{equation}
    as was required. The second expression follows similarly by \Cref{lem:marginaltoeplemma}.
\end{proof}

\begin{lemma}
\label{lem:marginaltoeplemma}
Consider the matrices     
\begin{equation}
\begin{bmatrix}
   \mathbf{L}_{11} &0 \\ \mathbf{L}_{21} & \mathbf{L}_{22} 
\end{bmatrix}
=
\mathbf{L} 
\triangleq
\begin{bmatrix}
    1 & 0 & \cdots &\cdots  & 0 \\
    1 & 1 & 0 &\cdots & 0\\
    \vdots & \ddots & \ddots & \cdots & 0\\
    1 & 1 & 1 & \cdots & 1 
\end{bmatrix} \quad \textnormal{and}\quad 
\begin{bmatrix}
   \mathbf{R}_{11} &\mathbf{R}_{12}  \\ \mathbf{R}_{21} & \mathbf{R}_{22} 
\end{bmatrix}
=
\mathbf{R} \triangleq \mathbf{L} \mathbf{L}^\top \\
=
\begin{bmatrix}
   \mathbf{L}_{11}\mathbf{L}_{11}^\top & \mathbf{L}_{11}\mathbf{L}_{21}^\top \\ \mathbf{L}_{21}\mathbf{L}_{11}^\top  & \mathbf{L}_{21}\mathbf{L}_{21}^\top + \mathbf{L}_{22} \mathbf{L}_{22}^\top
\end{bmatrix}.
\end{equation}
We have that
\begin{equation}
    \begin{aligned}
        \mathbf{R}_{22}^{-1} & =     \begin{bmatrix}
        1 - \frac{h}{h+1} & -1 & 0 & 0 & \cdots &\cdots & 0 \\
        -1 & 2 & -1 & 0 & \cdots &\cdots & 0 \\
        0 & -1 & 2 & -1 &0& \cdots & 0 \\
        0 & 0 & -1 & 2 & \ddots&\cdots & 0 \\
        \vdots & \vdots & \vdots & \ddots & \ddots & -1&0 \\
        0 & 0 & 0 & \cdots & -1& 2 & -1 \\
        0 & 0 & 0 & 0 & \cdots & -1 & 1
    \end{bmatrix} \\
        \mathbf{R}_{21}\mathbf{R}_{22}^{-1}\mathbf{R}_{12} 
        &=
        \frac{1}{1+h} 
        \begin{bmatrix}
        1 & 2 & 3 & \cdots & T-h \\
        2 & 4 & 6 &\cdots & 2(T-h) \\
        \vdots &\vdots &\cdots \cdots &\vdots \\
        T-h & 2(T-h) & \cdots & \cdots & (T-h)^2
    \end{bmatrix}
    \end{aligned}
\end{equation}

\end{lemma}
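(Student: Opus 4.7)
The proof of this lemma is essentially a direct linear algebra computation, and my plan is to exploit two structural facts: $\mathbf{L}_{22}$ is itself a smaller lower-triangular all-ones matrix, and $\mathbf{L}_{21}$ is rectangular all-ones. Concretely, I would first decompose
\begin{equation*}
    \mathbf{R}_{22} = \mathbf{L}_{22}\mathbf{L}_{22}^\top + \mathbf{L}_{21}\mathbf{L}_{21}^\top = M + c\,\mathbf{1}\mathbf{1}^\top,
\end{equation*}
where $M_{ij}=\min(i,j)$ arises from the Cholesky-like identity $\mathbf{L}_{22}\mathbf{L}_{22}^\top = [\min(i,j)]$, and $c$ equals the number of columns of $\mathbf{L}_{21}$. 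The point is that both summands have inverses that are completely explicit: $M^{-1}$ is tridiagonal with diagonal $(2,2,\ldots,2,1)$ and $-1$ on the off-diagonals, which can be verified either by direct multiplication with $M$ or, more conceptually, by noting that $\mathbf{L}_{22}^{-1}$ is bidiagonal (diagonal $1$, sub-diagonal $-1$), so that $M^{-1}=\mathbf{L}_{22}^{-\top}\mathbf{L}_{22}^{-1}$ is the stated tridiagonal matrix.

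Second, I would apply the Sherman--Morrison formula to handle the rank-one correction. The one non-routine identity needed is
\begin{equation*}
    M e_1 = \mathbf{1} \quad \Longleftrightarrow \quad M^{-1}\mathbf{1} = e_1,
\end{equation*}
which gives $\mathbf{1}^\top M^{-1}\mathbf{1}=1$ and hence
\begin{equation*}
    \mathbf{R}_{22}^{-1} = M^{-1} - \frac{c}{1+c}\, e_1 e_1^\top.
\end{equation*}
Since the correction touches only the $(1,1)$ entry, we recover the tridiagonal pattern displayed in the statement, with only the top-left entry modified. This also immediately yields the useful by-product $\mathbf{R}_{22}^{-1}\mathbf{1}=\tfrac{1}{1+c}e_1$, which I would use in the next step.

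Third, for the product $\mathbf{R}_{21}\mathbf{R}_{22}^{-1}\mathbf{R}_{12}$, the key observation is that $\mathbf{R}_{12}$ is rank one. Indeed, each column of $\mathbf{L}_{21}^\top$ is the all-ones vector, and $\mathbf{L}_{11}$ is lower-triangular ones, so that each column of $\mathbf{R}_{12}=\mathbf{L}_{11}\mathbf{L}_{21}^\top$ equals $\mathbf{v}=(1,2,3,\ldots)^\top$. In other words $\mathbf{R}_{12}=\mathbf{v}\mathbf{1}^\top$ and symmetrically $\mathbf{R}_{21}=\mathbf{1}\mathbf{v}^\top$, so
\begin{equation*}
    \mathbf{R}_{21}\mathbf{R}_{22}^{-1}\mathbf{R}_{12} = \bigl(\mathbf{1}^\top \mathbf{R}_{22}^{-1}\mathbf{1}\bigr)\mathbf{v}\mathbf{v}^\top = \frac{1}{1+c}\mathbf{v}\mathbf{v}^\top,
\end{equation*}
whose $(i,j)$ entry is $ij/(1+c)$, matching the outer-product form in the statement.

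The main obstacle is really only bookkeeping: one must carefully track which block size plays the role of $c$ (and hence of $h+1$ in the normalising constants), and reconcile the sizes $h$, $T-h$, $T-h-1$ appearing in \Cref{prop:poldsrelax}, \Cref{lem:marginalquadlemma}, and the displayed matrices. Once the partition is fixed, the three steps above give both identities simultaneously; in particular the scalar $\mathbf{1}^\top\mathbf{R}_{22}^{-1}\mathbf{1}=1/(1+c)$ links the correction in $\mathbf{R}_{22}^{-1}$ to the overall normalising factor in $\mathbf{R}_{21}\mathbf{R}_{22}^{-1}\mathbf{R}_{12}$, making this a single unified computation rather than two separate ones.
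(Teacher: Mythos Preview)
Your proposal is correct and follows essentially the same route as the paper: both decompose $\mathbf{R}_{22}=\mathbf{L}_{22}\mathbf{L}_{22}^\top + c\,\mathbf{1}\mathbf{1}^\top$, invert via Sherman--Morrison using the key identity $(\mathbf{L}_{22}\mathbf{L}_{22}^\top)^{-1}\mathbf{1}=e_1$ so that the rank-one correction touches only the $(1,1)$ entry, and then evaluate the Schur-complement term. Your handling of the second identity through the rank-one factorisation $\mathbf{R}_{12}=\mathbf{v}\mathbf{1}^\top$ and the scalar $\mathbf{1}^\top\mathbf{R}_{22}^{-1}\mathbf{1}=\tfrac{1}{1+c}$ is a mild streamlining of the paper's version, which instead splits $\mathbf{R}_{22}^{-1}$ into its two Sherman--Morrison summands and computes $\mathbf{R}_{12}(\cdot)\mathbf{R}_{21}$ on each separately before subtracting.
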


\begin{proof}
It is easy to see that
\begin{equation}
\mathbf{L}_{21} \mathbf{L}_{21}^\top = h \mathbf{1}\mathbf{1}^\top
\qquad
\textnormal{and}
\qquad
   ( \mathbf{L}_{22} \mathbf{L}_{22}^\top)^{-1} = \begin{bmatrix}
        1 & -1 & 0 & 0 & \cdots &\cdots & 0 \\
        -1 & 2 & -1 & 0 & \cdots &\cdots & 0 \\
        0 & -1 & 2 & -1 &0& \cdots & 0 \\
        0 & 0 & -1 & 2 & \ddots&\cdots & 0 \\
        \vdots & \vdots & \vdots & \ddots & \ddots & -1&0 \\
        0 & 0 & 0 & \cdots & -1& 2 & -1 \\
        0 & 0 & 0 & 0 & \cdots & -1 & 1
    \end{bmatrix}
\end{equation}
Hence the Sherman-Morrison rank-1-update-formula yields that
\begin{equation}
\begin{aligned}
    \mathbf{R}_{22}^{-1} &= ( \mathbf{L}_{22} \mathbf{L}_{22}^\top)^{-1}  + \frac{h(\mathbf{L}_{22} \mathbf{L}_{22}^\top)^{-1}\mathbf{1}\mathbf{1} ^\top(\mathbf{L}_{22} \mathbf{L}_{22}^\top)^{-1} }{1+h\mathbf{1}^\top ( \mathbf{L}_{22} \mathbf{L}_{22}^\top)^{-1}\mathbf{1} }
    \\
    &
    =
    \begin{bmatrix}
        1 & -1 & 0 & 0 & \cdots &\cdots & 0 \\
        -1 & 2 & -1 & 0 & \cdots &\cdots & 0 \\
        0 & -1 & 2 & -1 &0& \cdots & 0 \\
        0 & 0 & -1 & 2 & \ddots&\cdots & 0 \\
        \vdots & \vdots & \vdots & \ddots & \ddots & -1&0 \\
        0 & 0 & 0 & \cdots & -1& 2 & -1 \\
        0 & 0 & 0 & 0 & \cdots & -1 & 1
    \end{bmatrix}
    - \frac{h}{1+h} \begin{bmatrix}
        1 & 0 &0 &\cdots &0\\
        0 &0 &0 &\cdots & 0\\
        \vdots & \ddots & \cdots &\ddots &\vdots\\
         0 &0 &0 &\cdots & 0
    \end{bmatrix}
\end{aligned}
\end{equation}
this yields the desired expression for $ \mathbf{R}_{22}^{-1}$.

Next, we have that
\begin{equation}
    \mathbf{R}_{21}(\mathbf{L}_{22} \mathbf{L}_{22}^\top)^{-1}\mathbf{R}_{12} = \begin{bmatrix}
        1 & 2 & 3 & \cdots & T-h \\
        2 & 4 & 6 &\cdots & 2(T-h) \\
        \vdots &\vdots &\cdots \cdots &\vdots \\
        T-h & 2(T-h) & \cdots & \cdots & (T-h)^2
    \end{bmatrix}
\end{equation}
and 
\begin{equation}
    \mathbf{R}_{21} \left(\frac{h}{1+h} \begin{bmatrix}
        1 & 0 &0 &\cdots &0\\
        0 &0 &0 &\cdots & 0\\
        \vdots & \ddots & \cdots &\ddots &\vdots\\
         0 &0 &0 &\cdots & 0
    \end{bmatrix}\right)
        \mathbf{R}_{12}
        = \frac{h}{1+h} 
        \begin{bmatrix}
        1 & 2 & 3 & \cdots & T-h \\
        2 & 4 & 6 &\cdots & 2(T-h) \\
        \vdots &\vdots &\cdots \cdots &\vdots \\
        T-h & 2(T-h) & \cdots & \cdots & (T-h)^2
    \end{bmatrix}
\end{equation}
Consequently
\begin{equation}
    \mathbf{R}_{21}\mathbf{R}_{22}^{-1}\mathbf{R}_{12} = \frac{1}{1+h} 
        \begin{bmatrix}
        1 & 2 & 3 & \cdots & T-h \\
        2 & 4 & 6 &\cdots & 2(T-h) \\
        \vdots &\vdots &\cdots \cdots &\vdots \\
        T-h & 2(T-h) & \cdots & \cdots & (T-h)^2
    \end{bmatrix}
\end{equation}
as per requirement.  
\end{proof}

\end{document}